\newtheorem{lemma}{Lemma}[section]
\newcommand{\etal}{\textit{et al}.}
\newcommand{\BigO}{\mathcal{O}}
\newcommand{\BigOT}{\tilde{\mathcal{O}}}
\newcommand{\todonote}[1]{\ifthenelse{\boolean{disable-todos}}{}{ \todo[inline]{#1}}}
\newcommand{\eps}{\varepsilon}
\newcommand{\reals}{\mathbb{R}}
\newcommand{\nd}{\overline{d}}
\newcommand{\ns}{\overline{s}}
\newcommand{\NU}{\mathcal{U}}
\newcommand{\nw}{\overline{w}}
\newcommand{\nc}{\overline{c}}
\newcommand{\dir}[1]{\overrightarrow{#1}}
\newcommand{\augment}{\textsc{Augment}}
\newcommand{\forward}[1]{#1^{\uparrow}}
\newcommand{\backward}[1]{#1^{\downarrow}}
\title{A Graph Theoretic Additive Approximation of Optimal Transport}
\author{%
  Nathaniel Lahn \\
  Department of Computer Science\\
  Virginia Tech\\
  Blacksburg, VA 24061 \\
  \texttt{lahnn@vt.edu} \\
   \And
  Deepika Mulchandani \\
   Virginia Tech \\
   Blacksburg, VA 24061 \\
   \texttt{deepikak@vt.edu} \\
   \AND
   Sharath Raghvendra \\
   Virginia Tech \\
   Blacksburg, VA 24061 \\
   \texttt{sharathr@vt.edu} \\
}
\begin{document}

\maketitle

\begin{abstract}
Transportation cost is an attractive similarity measure between probability distributions due to its many useful theoretical properties.  However, solving optimal transport exactly can be prohibitively expensive. Therefore, there has been significant effort towards the design of scalable approximation algorithms. Previous combinatorial results [Sharathkumar, Agarwal STOC '12, Agarwal, Sharathkumar STOC '14] have focused primarily on the design of near-linear time multiplicative approximation algorithms. There has also been an effort to design approximate solutions with additive errors [Cuturi NIPS '13, Altschuler \etal\ NIPS '17, Dvurechensky \etal\, ICML '18, Quanrud, SOSA '19]  within a time bound that is linear in the size of the cost matrix and polynomial in $C/\delta$; here $C$ is the largest value in the cost matrix and $\delta$ is the additive error.  
We present an adaptation of the classical graph algorithm of Gabow and Tarjan and provide a novel analysis of this algorithm that bounds its execution time by $\BigO(\frac{n^2 C}{\delta}+ \frac{nC^2}{\delta^2})$. Our algorithm is extremely simple and executes, for an arbitrarily small constant $\eps$, only $\lfloor \frac{2C}{(1-\eps)\delta}\rfloor + 1$ iterations, where each iteration consists only of a Dijkstra-type search followed by a depth-first search.  We also provide empirical results that suggest our algorithm is competitive with respect to a sequential implementation of the Sinkhorn algorithm in execution time. Moreover, our algorithm quickly computes a solution for very small values of $\delta$ whereas Sinkhorn algorithm slows down due to numerical instability.
\end{abstract}

\section{Introduction}
Transportation cost has been successfully used as a measure of similarity between data sets such as point clouds, probability distributions, and images. Originally studied in operations research, the transportation problem is a fundamental problem where we are given a set $A$ of `demand' nodes and a set $B$ of `supply' nodes with a non-negative demand of $d_a$ at node $a \in A$ and a non-negative supply $s_b$ at node $b \in B$.  Let $G(A, B)$ be a complete bipartite graph on $A, B$ with $n = |A| + |B|$ where $c(a,b) \ge 0$ denotes the cost of transporting one unit of supply from $b$ to $a$; we assume that $C$ is the largest cost of any edge in the graph. We assume that the cost function is symmetric, i.e., $c(a,b) = c(b,a)$. Due to symmetry in costs, without loss of generality, we will assume throughout that the total supply is at most the total demand. Let $U = \sum_{b\in B}s_b $.  A \emph{transport plan} is a function $\sigma : A \times B \rightarrow \mathbb{R}_{\geq 0}$ that assigns a non-negative value $\sigma(a,b)$ to every edge $(a,b)$ with the constraints that the total supply coming into any node $a \in A$ is at most $d_a$, i.e., $\sum_{b \in B} \sigma(a,b) \le d_a$ and the total supply leaving a node $b \in B$ is at most $s_b$, i.e., $\sum_{a\in A}\sigma(a,b) \le s_b$. A \emph{maximum transport plan} is one where, for every $b \in B$, $\sum_{a\in A} \sigma(a,b) = s_b$, i.e., every available supply is transported. The cost incurred by any transport plan $\sigma$, denoted by $w(\sigma)$ is $\sum_{(a,b)\in A\times B} \sigma(a,b)c(a,b)$. In the transportation problem, we wish to compute the minimum-cost maximum transport plan.

There are many well-known special versions of the transportation problem. For instance, when all the demands and supplies are positive integers, the problem is the Hitchcock-Koopmans transportation problem. When the demandname.texual to $1$, i.e., $U=1$. This is the problem of computing \emph{optimal transport} distance between two distributions. When the cost of transporting between nodes is a metric, the optimal transport cost is also the Earth Mover's distance (EMD). If instead, the costs between two nodes is the $p$-th power of some metric cost with $p > 1$, the optimal transport cost is also known as the $p$-Wasserstein's distance.  These special instances are of significant theoretical interest ~\cite{sa_stoc_14,khesinSOCG19,socg19,PhillipsA06,sa_stoc12,sharathSODA12} and also have numerous applications in operations research, machine learning, statistics, and computer vision~\cite{app1,app2,app3,app4,app5,app6}.

{\bf Related work:} There are several combinatorial algorithms for the transportation problem. The classical Hungarian method computes an optimal solution for the assignment problem by using linear programming duality in $\BigO(n^3)$ time~\cite{hungarian_56}. In a seminal paper, Gabow and Tarjan applied the cost scaling paradigm and obtained an $\BigO(n^{2.5}\log (nC))$ time algorithm for the assignment problem~\cite{gt_sjc89}. They extended their algorithm to the transportation problem with an execution time of $\BigO((n^2\sqrt{U}+ U\log U)\log (nC))$; their algorithm requires the demands, supplies and edge costs to be integers. For the optimal transport problem, scaling the demands and supplies to  integers will cause $U$ to be $\Omega(n)$. Therefore, the execution time of the GT-Algorithm will be $\Omega(n^{2.5})$.
Alternatively, one could use the demand scaling paradigm to obtain an execution time of $\BigO(n^3\log U)$~\cite{edmondsflow72}. For integer supplies and demands, the problem of computing a minimum-cost maximum transport plan can be reduced to the problem of computing a minimum-cost maximum flow, and applying the result of~\cite{sidford} gives a $\BigOT(n^{2.5}\mathrm{poly}\log(U))$\footnote{We use the notation $\BigOT$ to suppress additional logarithmic terms in $n$.} time algorithm.

We would also like to note that there is an $\BigO(n^{\omega}C)$ time algorithm to compute an optimal solution for the assignment problem~\cite{fast_matrix_matching}; here, $\omega$ is the exponent of matrix multiplication time complexity. With the exception of this algorithm, much of the existing work for exact solutions have focused on design of algorithms that have an execution time polynomial in $n, \log U$, and $\log C$. All existing exact solutions, however, are quite slow in practice. This has shifted focus towards the design of approximation algorithms. 

Fundamentally, there are two types of approximate transport plans that have been considered. We refer to them as $\delta$-approximate and $\delta$-close transport plans and describe them next.
Suppose $\sigma^*$ is a maximum transport plan with the smallest cost. A $\delta$-approximate transport plan is  one whose cost is within $(1+\delta)w(\sigma^*)$, whereas a transport plan $\sigma$ is $\delta$-close if its cost is at most an additive value $U\delta$ larger than the optimal, i.e., $w(\sigma) \le w(\sigma^*)+U\delta$. Note that, for discrete probability distributions $U=1$, and, therefore, a $\delta$-close solution is within an additive error of $\delta$ from the optimal.

For metric and geometric costs, there are several algorithms to compute $\delta$-approximate transport plans that execute in time near-linear in the input size and polynomial in $(\log n, \log U, \log C)$. For instance, $\delta$-approximate transport plans for the $d$-dimensional Euclidean assignment problem and the Euclidean transportation problem can be computed in $n (\log n/\delta)^{\BigO(d)} \log U$ time~\cite{khesinSOCG19,sa_stoc12}. For metric costs, one can compute a $\delta$-approximate transport plan in $\BigOT(n^2)$ time~\cite{sa_stoc_14, shermanSODA17}. There are no known $\delta$-approximation algorithms that execute in near-linear time when the costs are arbitrary.

There are several algorithms that return $\delta$-close transport plan for any arbitrary cost function. Among these, the best algorithms take $\BigOT(n^2(C/\delta))$ time. In fact, Blanchet~\etal~\cite{sidford2018optimal} showed that any algorithm with a better dependence on $C/\delta$ can be used to compute a maximum cardinality matching in any arbitrary bipartite graph in $o(n^{5/2})$ time. Therefore, design of improved algorithms with sub-linear dependence on $(C/\delta)$ seems extremely challenging.   
 See Table \ref{table:complexity} for a summary of various results for computing a $\delta$-close transport plan. Note that all previous results have one or more factors of $\log n$ in their execution time. While some of these poly-logarithmic factors are artifacts of worst-case analyses of the algorithms, one cannot avoid them all-together in any practical implementation. 

Due to this, only a small fraction of these results have reasonable implementation that also perform well in practical settings. We would like to highlight the results of Cuturi~\cite{cuturiNIPS13}, Altschuler~\etal~\cite{altschulerNIPS17}, and, Dvurechensky \etal~\cite{dvurechenskyICML18}, all of which are based on the Sinkhorn projection technique. All these implementations, however, suffer from a significant increase in running time and numerical instability for smaller values of $\delta$. These algorithms are practical only when $\delta$ is moderately large.  
  
In an effort to design scalable solutions, researchers have explored several avenues. For instance, there has been an effort to design parallel algorithms~\cite{jambulapati2019direct}. Another avenue for speeding algorithms is to exploit the cost structure. For instance, Sinkhorn based algorithms can exploit the structure of squared Euclidean distances leading to a $\BigO(n(C/\delta)^d\log^d n)$ time algorithm that produces a $\delta$-close transport plan~\cite{altschuler2018approximating}. 

\begin{table}
\caption{A summary of existing algorithms for computing a $\delta$-close transport plan.}
\centering
\label{table:complexity}
\begin{tabular}{*4l}\toprule
    \textbf{Algorithm}
  & \textbf{Time Complexity}\\\midrule
 \textbf{Altschuler \etal, '17}	&$\BigOT(n^2 (C/\delta)^3)$~\cite{altschulerNIPS17}	\\
 \textbf{Dvurechensky \etal, '18}			&   $\BigOT(\min(n^{9/4}\sqrt{C}/\delta, n^2 C/\delta^2)$~\cite{dvurechenskyICML18}~\footnotemark
   	&       \\
 \textbf{Lin \etal, '19}			&   $\BigOT(\min(n^{2}C\sqrt{\gamma}/\delta, n^2 (C/\delta)^2)$~\cite{lin2019efficient}~\footnotemark[\value{footnote}]
 
 &
 \\
 \textbf{Quanrud, '19} &   $\BigOT(n^2C/\delta)$~\cite{quanrudSOSA19} 	 	\\
  \textbf{Blanchet \etal, '19}			&   $\BigOT(n^{2}C/\delta)$~\cite{sidford2018optimal}
	
  &
  
 \\
 \textbf{Our Result}			&   $\BigO(n^2C/\delta + n(C/\delta)^2)$\\
\bottomrule
\end{tabular}
\end{table}

 \footnotetext{These results have an additional data dependent parameter in the running time. For the result in Lin \etal, $\gamma = \BigO(n)$ is this parameter.}
 
{\bf Our results and approach:} We present a deterministic primal-dual algorithm to compute a $\delta$-close solution in $\BigO(n^2(C/\delta)+ n(C/\delta)^2)$ time; note that $n^2 (C/\delta)$ is the dominant term in the execution time provided $C/\delta$ is $\BigO(n)$. Our algorithm is an adaptation of a single scale of Gabow and Tarjan's scaling algorithm for the transportation problem. Our key contribution is a diameter-sensitive analysis of this algorithm. The dominant term in the execution time is linear in the size of the cost-matrix and linear in $(C/\delta)$. The previous results that achieve such a bound are randomized and have additional logarithmic factors~\cite{sidford2018optimal, quanrudSOSA19}, whereas our algorithm does not have any logarithmic factors and is deterministic. Furthermore, we can also exploit the cost structure to improve the execution time of our algorithm to $\BigOT(n(C/\delta)^2)$ for several geometric costs. 

We transform our problem to one with integer demands and supplies in  $\BigO(n^2)$ time (Section~\ref{sec:preprocessing}). Given the transformed demands and supplies, our algorithm (in Section~\ref{Sec:algorithm}) scales down the cost of every edge $(a,b)$ to $\lfloor \frac{2c(a,b)}{(1-\eps)\delta}\rceil$ for an arbitrarily small constant $0 < \eps < 1$, and executes,  at most $\lfloor 2C/((1-\eps)\delta)\rfloor + 1$ phases. Within each phase, the algorithm executes two steps. The first step (also called the Hungarian Search) executes Dijkstra's algorithm $(\BigO(n^2))$ and adjusts the weights corresponding to a dual linear program to find an augmenting path consisting of zero slack edges. The second step executes DFS from every node with an excess supply and finds augmenting paths of zero slack edges to route these supplies. The time taken by this step is $\BigO(n^2)$ for the search and an additional time proportional to the sum of the lengths of all the augmenting paths found by the algorithm.  We bound this total length of paths found during the course of the algorithm by $\BigO(n/\eps(1-\eps)(C/\delta)^2)$.

{\bf Comparison with Gabow-Tarjan:} Our algorithm can be seen as executing a single scale of Gabow and Tarjan's algorithm for carefully scaled integer demand, integer supply, and integer cost functions. Let $\NU$ be the total integer supply. Our analysis differs from Gabow and Tarjan's analysis in the following ways. Gabow and Tarjan's algorithm computes an optimal solution only when the total supply, i.e., $\NU$ is equal to total demand. In fact, there has been substantial effort in extending it to the unbalanced case~\cite{ramshawtarjan}. Our transformation to integer demands and supply in Section~\ref{sec:preprocessing} makes the problem inherently unbalanced. However, we identify the fact that the difficulty with unbalanced demand and supply exists only when the algorithm executes multiple scales. We provide a proof that our algorithm works for the unbalanced case (see Lemma~\ref{lem:optcost}). To bound the number of phases by $\BigO(\sqrt{\NU})$ and the length of the augmenting paths by $\BigO(\NU\log \NU)$, Gabow and Tarjan's proof requires the optimal solution to be of $\BigO(\NU)$ cost. We use a very different argument to bound the number of phases. Our proof (see Lemma~\ref{lem:phases} and Lemma~\ref{lem:augpathlength}) is direct and does not have any dependencies on the cost of the optimal solution. 

{\bf Experimental evaluations:} We contrast the empirical performance of our algorithm with the theoretical bounds presented in this paper and observe that the total length of the augmenting paths is substantially smaller than the worst-case bound of $n(C/\delta)^2$. We also compare our implementation with sequential implementations of Sinkhorn projection-based algorithms on real-world data. Our algorithm is competitive with respect to other methods for moderate and large values of $\delta$. Moreover, unlike Sinkhorn projection based approaches, our algorithm is numerically stable and executes efficiently for smaller values of $\delta$. We present these comparisons in Section~\ref{sec:experiments}.    

{\bf Extensions:} In Section~\ref{sec:extensions}, we discuss a faster implementation of our algorithm using a dynamic weighted nearest neighbor data structure with an execution time of $\BigOT(n(C/\delta)\Phi(n))$. Here, $\Phi(n)$ is the query and update time of this data structure. As consequences, we obtain an $\BigOT(n(C/\delta))$ time algorithm to compute $\delta$-close optimal transport for several settings including when $A, B \subset \reals^2$ and the costs are Euclidean distances and squared-Euclidean distances.

\subsection{Scaling demands and supplies}
\label{sec:preprocessing}
In this section, we transform the demands and supplies to integer demands and supplies. By doing so, we are able to apply the traditional framework of augmenting paths and find an approximate solution to the transformed problem in $\BigO(\frac{n^2C}{\delta} + \frac{nC^2}{\delta^2})$ time. Finally, this solution is mapped to a feasible solution for the original demands and supplies. The total loss in accuracy in the cost due to this transformation is at most $\eps U\delta$. 

Let $0 < \eps < 1$. Set $\alpha=\frac{2nC}{\eps U\delta}$. Let $\mathcal{I}$ be the input instance for the transportation problem with each demand location $a \in A$ having a demand of $d_a$ and each supply location $b \in B$ having a supply of $s_b$.  We create a new input instance $\mathcal{I}'$ by scaling the demand at each node $a \in A$ to $\nd_a = \lceil d_a\alpha  \rceil$ and scaling the supply at each node $b \in B$ to $\ns_b = \lfloor s_b\alpha\rfloor$. Let the total supply be $\NU = \sum_{b\in B} \ns_b$. Since we scale the supplies by $\alpha$ and round them down, we have \begin{equation}\label{eq:totalsupply} \NU = \sum_{b\in B}\ns_b = \sum_{b\in B} \lfloor s_b\alpha\rfloor \le \alpha \sum_{b\in B} s_b = \alpha U.\end{equation}
Recollect that for any input $\mathcal{I}$ to the transportation problem, the total supply is no more than the total demand. Since the new supplies are scaled by $\alpha$ and rounded down whereas the new demands are scaled by $\alpha$ and rounded up,  the total supplies in $\mathcal{I}'$ remains no more than the total demand. 
Let $\sigma'$ be any feasible maximum transport plan for $\mathcal{I}'$. Now consider a transport plan $\sigma$ that sets, for each edge $(a,b)$, $\sigma(a,b) = \sigma'(a,b)/\alpha$. As described below, the transport plan $\sigma$ is not necessarily feasible or maximum  for $\mathcal{I}$.  
\begin{itemize}

\item[(i)]\label{eq:leftover}   $\sigma$ is not necessarily a maximum transport plan for $\mathcal{I}$ since the total supplies transported out of any node $b \in B$ is $ \sum_{a\in A} \sigma(a,b) = \sum_{a\in A}\sigma'(a,b)/\alpha = \lfloor \alpha s_b\rfloor/\alpha \le s_b$. Note that the excess supply remaining at any node $b \in B$ is $\kappa_b= s_b - \lfloor \alpha s_b \rfloor /\alpha \le 1/\alpha$. 
\item[(ii)]  $\sigma$ is not a feasible plan for $\mathcal{I}$ since the total demand met at any node $a \in A$ can be more than $d_a$, i.e.,
$ \sum_{b\in B} \sigma(a,b) = \sum_{b \in B} \sigma'(a,b)/\alpha = \nd_a/\alpha =\lceil \alpha d_a\rceil/\alpha \ge d_a $. 
Note that the excess supply that reaches node $a \in A$, $ \kappa_a \leq \lceil \alpha d_a\rceil/\alpha - d_a \le \frac{\alpha d_a + 1}{\alpha}-d_a= 1/\alpha$. 

\end{itemize}

The cost of $\sigma$, $w(\sigma) = w(\sigma')/\alpha$. We can convert $\sigma$ to a feasible and maximum transport plan for $\mathcal{I}$ in two steps. 

First, one can convert $\sigma$ to a feasible solution. The excess supply $\kappa_a$ that reaches a demand node $a \in A$  can be removed by iteratively picking an arbitrary edge incident on $a$, say the edge $(a,b)$, and reducing $\sigma(a,b)$ as well as $\kappa_a$ by $\min\{\kappa_a, \sigma(a,b)\}$. This iterative process is applied until $\kappa_a$ reduces to $0$. This step is also repeated at every demand node $a \in A$ with an $\kappa_a >0$. The total excess supply pushed back will increase the leftover supply at the supply nodes by $\sum_{a\in A}\kappa_a \le n/\alpha$. Combined with the left-over supply from~(i), the total remaining supply in $\sigma$ is at most $2n/\alpha$. $\sigma$ is now a feasible transportation plan with an excess supply of at most $2n/\alpha$. Since the supplies transported along edges only reduce, the cost $w(\sigma) \le w(\sigma')/\alpha$.

Second, to convert this feasible plan $\sigma$ to a maximum transport plan, one can simply match the remaining $2n/\alpha$ supplies arbitrarily to leftover demands at a cost of at most $C$ per unit of supply. The cost of this new transport plan increases by at most $2nC/\alpha$ and so,
\begin{equation}
\label{eq:cost}
w(\sigma) \le w(\sigma')/\alpha + \frac{2nC}{\alpha} \le w(\sigma')/\alpha + \eps U\delta.
\end{equation}
Recollect that $\sigma^*$ is the optimal solution for $\mathcal{I}$. Let $\sigma'_{\mathrm{OPT}}$ be the optimal solution for input instance $\mathcal{I}'$. In Lemma~\ref{lem:optrel} (proof of which is in the supplement), we show that $w(\sigma'_{\mathrm{OPT}}) \le \alpha w(\sigma^*)$.   In the Section~\ref{Sec:algorithm}, we show how to construct a transport plan $\sigma'$ with a cost $w(\sigma') \le w(\sigma'_{\mathrm{OPT}}) + (1-\eps)\NU\delta$, which from Lemma~\ref{lem:optrel}, can be rewritten as $w(\sigma') \le \alpha w(\sigma^*) + (1-\eps)\NU\delta$. By combining this with equations~\eqref{eq:totalsupply} and \eqref{eq:cost}, the solution produced by our algorithm is $w(\sigma) \le w(\sigma^*) + (1-\eps)\NU\delta/\alpha + \eps U\delta \le w(\sigma^*)+(1-\eps)U\delta +\eps U\delta = w(\sigma^*) + U\delta$. 

\begin{lemma}
\label{lem:optrel}
Let $\alpha > 0$, be a parameter. Let $\mathcal{I}$ be the original instance of the transportation problem and let $\mathcal{I'}$ be an instance scaled by $\alpha$. Let $\sigma^*$ be the minimum-cost maximum transport plan for $\mathcal{I}$ and let $\sigma'_{\mathrm{OPT}}$ be an minimum-cost maximum transport plan for $\mathcal{I}'$. Then $w(\sigma'_{\mathrm{OPT}}) \leq \alpha w(\sigma^*)$.
\end{lemma}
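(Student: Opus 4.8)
The plan is to turn the optimal plan $\sigma^*$ for $\mathcal{I}$ into \emph{some} feasible maximum transport plan for $\mathcal{I}'$ whose cost is at most $\alpha\,w(\sigma^*)$; since $\sigma'_{\mathrm{OPT}}$ is by definition the cheapest such plan, the inequality $w(\sigma'_{\mathrm{OPT}}) \le \alpha\,w(\sigma^*)$ follows immediately. First I would define the scaled plan $\tilde\sigma(a,b) = \alpha\,\sigma^*(a,b)$ on every edge. Linearity of the cost functional gives $w(\tilde\sigma) = \alpha\,w(\sigma^*)$ (the edge costs are unchanged between $\mathcal{I}$ and $\mathcal{I}'$). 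I then check the constraints of $\mathcal{I}'$. On the demand side, $\sum_{b} \sigma^*(a,b) \le d_a$ yields $\sum_{b} \tilde\sigma(a,b) \le \alpha d_a \le \lceil \alpha d_a\rceil = \nd_a$, so the demand constraints of $\mathcal{I}'$ already hold for $\tilde\sigma$.

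On the supply side there is a small discrepancy: because $\sigma^*$ is a \emph{maximum} transport plan for $\mathcal{I}$ we have $\sum_{a} \sigma^*(a,b) = s_b$, hence $\sum_{a} \tilde\sigma(a,b) = \alpha s_b$, which can exceed $\ns_b = \lfloor \alpha s_b\rfloor$ by the amount $\alpha s_b - \lfloor \alpha s_b\rfloor < 1$. To repair this I would, at each supply node $b$, iteratively pick an edge $(a,b)$ carrying positive flow and decrease both $\tilde\sigma(a,b)$ and the current surplus $\sum_{a}\tilde\sigma(a,b) - \ns_b$ by their minimum, repeating until the surplus is $0$; this is always possible since the total flow $\alpha s_b$ leaving $b$ is at least the surplus to be removed. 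Let $\sigma'$ denote the resulting plan. By construction $\sum_{a}\sigma'(a,b) = \ns_b$ for every $b\in B$, so $\sigma'$ is a maximum transport plan for $\mathcal{I}'$; and since flow only decreased, $\sum_{b}\sigma'(a,b) \le \sum_{b}\tilde\sigma(a,b) \le \nd_a$ still holds, so $\sigma'$ is feasible for $\mathcal{I}'$.

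Finally, because every cost $c(a,b)$ is non-negative and we only removed flow, $w(\sigma') \le w(\tilde\sigma) = \alpha\,w(\sigma^*)$, and therefore $w(\sigma'_{\mathrm{OPT}}) \le w(\sigma') \le \alpha\,w(\sigma^*)$, as claimed. I do not anticipate a genuine obstacle: the only delicate point is the supply-side rounding, and the fix is exactly the feasibility-repair argument already used in Section~\ref{sec:preprocessing}, run in the opposite direction (here the instance is scaled \emph{up} by $\alpha$ rather than down).
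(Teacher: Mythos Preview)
Your proposal is correct and follows essentially the same approach as the paper's proof: both scale $\sigma^*$ by $\alpha$, observe that demand constraints are automatically satisfied while supply constraints are violated by at most $\alpha s_b - \lfloor \alpha s_b\rfloor$, and then iteratively strip off the excess flow at each supply node to obtain a feasible maximum plan for $\mathcal{I}'$ with cost at most $\alpha\,w(\sigma^*)$.
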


\section{Algorithm for scaled demands and supplies}
\label{Sec:algorithm}
The input $\mathcal{I}'$ consists of a set of demand nodes $A$ with demand of $\nd_a$ for each node $a \in A$ and a set of supply nodes $B$ with supply of $\ns_b$ for each node $b \in B$ along with the cost matrix as input. Let $\sigma'_{\mathrm{OPT}}$ be the optimal transportation plan for $\mathcal{I}'$. In this section, we present a variant of Gabow and Tarjan's algorithm that produces a plan $\sigma'$ with $w(\sigma') \le w(\sigma'_{\mathrm{OPT}}) + (1-\eps)\delta\NU$ in $\BigO(\frac{n^2C}{(1-\eps)\delta} + \frac{nC^2}{\eps(1-\eps)\delta^2})$ time. We obtain our result by setting $\eps$
 to be a constant such as $\eps=0.5$.

\paragraph{Definitions and notations:} Let $\delta'=(1-\eps)\delta$. We say that a vertex $a \in A$ (resp. $b \in B$) is \emph{free} with respect to a transportation plan $\sigma$ if $\nd_a-\sum_{b\in B}\sigma(a,b) > 0$ (resp. $\ns_b - \sum_{a\in A}\sigma(a,b) > 0$). At any stage in our algorithm, we use $A_F$ (resp. $B_F$) to denote the set of free demand nodes (resp. supply nodes). Let $\nc(a,b) = \lfloor 2c(a,b)/\delta' \rfloor$ be the scaled cost of any edge $(a,b)$. Recollect that $w(\sigma)$ is the cost of any transport plan $\sigma$ with respect to $c(\cdot,\cdot)$. Similarly, we use $\nw(\sigma)$ to denote the cost of any transport plan with respect to the $\nc(\cdot,\cdot)$.

This algorithm is based on a primal-dual approach. The algorithm, at all times, maintains a transport plan that satisfies the dual feasibility conditions. Given a transport plan $\sigma$ along with a dual weight $y(v)$ for every $v \in A\cup B$, we say that $\sigma, y(\cdot)$ is $1$-feasible if, for any  two nodes $a\in A$ and $b \in B$,
\begin{align}
    &y(a) + y(b) \leq \nc(a,b) + 1 &\text{if } \sigma(a,b) < \min\{s_b,d_a\} \label{eq:feas1}\\
    &y(a) + y(b) \geq \nc(a,b) &\text{ if } \sigma(a,b) >0.\label{eq:feas2}
\end{align}
 These feasibility conditions are identical to the one introduced by Gabow and Tarjan but for costs that are scaled by $2/\delta'$ and rounded down. We refer to a $1$-feasible transport plan that is maximum as a $1$-optimal transport plan. Note that Gabow-Tarjan's algorithm is defined for balanced transportation problem and so a maximum transport plan will also satisfy all demands. However, in our case there may still be unsatisfied demands. To handle them, we introduce the following additional condition. Consider any $1$-optimal transport plan $\sigma$ such that for every demand node $a \in A$,
 \begin{itemize}\item[(C)] The dual weight $y(a)\le 0$ and, if $a$ is a free demand node, then $y(a)=0$.\end{itemize} 
In Lemma~\ref{lem:optcost}, we show that any $1$-optimal transport plan $\sigma$ with dual weights $y(\cdot)$ satisfying (C) has the desired cost bound, i.e., $w(\sigma) \le w(\sigma'_{\mathrm{OPT}})+\delta'\NU$. 
 \begin{lemma}
 \label{lem:optcost}
 Let $\sigma$ along with dual weights $y(\cdot)$ be a $1$-optimal transport plan that satisfies (C). Let $\sigma'=\sigma'_{\mathrm{OPT}}$ be a minimum cost maximum transport plan. Then, $w(\sigma) \le w(\sigma') + \delta'\NU.$   
 \end{lemma}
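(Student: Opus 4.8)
The plan is to pass through the scaled cost function $\nc$ and reduce the whole statement to one combinatorial inequality. Since $\nc(a,b)=\lfloor 2c(a,b)/\delta'\rfloor$, every edge satisfies $\tfrac{\delta'}{2}\nc(a,b)\le c(a,b)<\tfrac{\delta'}{2}(\nc(a,b)+1)$. Summing the left inequality against $\sigma'$ gives $\tfrac{\delta'}{2}\nw(\sigma')\le w(\sigma')$, and summing the right inequality against $\sigma$, using that $\sigma$ is a maximum transport plan (so it carries total mass exactly $\NU$), gives $w(\sigma)<\tfrac{\delta'}{2}\nw(\sigma)+\tfrac{\delta'}{2}\NU$. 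Hence it suffices to prove the core inequality $\nw(\sigma)\le\nw(\sigma')+\NU$: chaining the three bounds then yields $w(\sigma)<\tfrac{\delta'}{2}(\nw(\sigma')+\NU)+\tfrac{\delta'}{2}\NU=\tfrac{\delta'}{2}\nw(\sigma')+\delta'\NU\le w(\sigma')+\delta'\NU$.

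For the core inequality I would run the Gabow--Tarjan-style dual-weight accounting, tracking the unbalanced terms with care. Expanding $\nw(\sigma)=\sum_{(a,b)}\sigma(a,b)\nc(a,b)$ and applying~\eqref{eq:feas2}, which gives $\nc(a,b)\le y(a)+y(b)$ on every edge with $\sigma(a,b)>0$, one gets $\nw(\sigma)\le\sum_{a}y(a)\big(\sum_b\sigma(a,b)\big)+\sum_{b}y(b)\big(\sum_a\sigma(a,b)\big)$. Condition (C) is exactly what collapses this: if $a$ is not free then $\sum_b\sigma(a,b)=\nd_a$, and if $a$ is free then $\sum_b\sigma(a,b)<\nd_a$ but $y(a)=0$, so in either case $y(a)\sum_b\sigma(a,b)=y(a)\nd_a$; and since $\sigma$ is maximum, $\sum_a\sigma(a,b)=\ns_b$ for every $b$. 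Thus $\nw(\sigma)\le\sum_a y(a)\nd_a+\sum_b y(b)\ns_b$.

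For the matching lower bound, note that~\eqref{eq:feas1} and~\eqref{eq:feas2} together give $\nc(a,b)\ge y(a)+y(b)-1$ on every edge (use~\eqref{eq:feas1} when $\sigma(a,b)<\min\{\ns_b,\nd_a\}$, and~\eqref{eq:feas2} when $\sigma(a,b)=\min\{\ns_b,\nd_a\}>0$; an edge with $\min\{\ns_b,\nd_a\}=0$ carries no flow in $\sigma'$ and may be ignored). Expanding $\nw(\sigma')=\sum_{(a,b)}\sigma'(a,b)\nc(a,b)$ with this bound and absorbing $-\sum_{(a,b)}\sigma'(a,b)=-\NU$ gives $\nw(\sigma')\ge\sum_a y(a)\big(\sum_b\sigma'(a,b)\big)+\sum_b y(b)\big(\sum_a\sigma'(a,b)\big)-\NU$. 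Now $\sum_b\sigma'(a,b)\le\nd_a$ and $y(a)\le 0$ by (C), so $y(a)\sum_b\sigma'(a,b)\ge y(a)\nd_a$; and $\sum_a\sigma'(a,b)=\ns_b$ since $\sigma'$ is maximum. Hence $\nw(\sigma')\ge\sum_a y(a)\nd_a+\sum_b y(b)\ns_b-\NU$, and subtracting from the upper bound on $\nw(\sigma)$ yields $\nw(\sigma)-\nw(\sigma')\le\NU$, which completes the proof.

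The only real obstacle is making sure the dual weights of the free (unsatisfied-demand) nodes do not break the telescoping of the $y$-terms; this is precisely why condition (C) is imposed and where the argument departs from the balanced Gabow--Tarjan analysis. In the balanced case both $\sigma$ and $\sigma'$ saturate every demand, so the $y(a)$-terms cancel outright; here, instead, we exploit $y(a)=0$ on free nodes for the upper bound on $\nw(\sigma)$ and $y(a)\le 0$ everywhere for the lower bound on $\nw(\sigma')$.
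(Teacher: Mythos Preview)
Your argument is almost identical to the paper's, but there is a genuine gap in the lower bound on $\nw(\sigma')$. You assert that \eqref{eq:feas1} and \eqref{eq:feas2} together yield $\nc(a,b)\ge y(a)+y(b)-1$ on every edge, invoking \eqref{eq:feas2} in the case $\sigma(a,b)=\min\{\ns_b,\nd_a\}>0$. But \eqref{eq:feas2} reads $y(a)+y(b)\ge\nc(a,b)$, which is the \emph{opposite} direction: it gives $\nc(a,b)\le y(a)+y(b)$, not $\nc(a,b)\ge y(a)+y(b)-1$. On a saturated edge the $1$-feasibility conditions impose no upper bound on $y(a)+y(b)$ at all, so $y(a)+y(b)$ can exceed $\nc(a,b)+1$ by an arbitrary amount, and your inequality fails. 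If such an edge carries positive flow in $\sigma'$, your lower bound on $\nw(\sigma')$ breaks.

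This is precisely the obstruction the paper's proof neutralizes with its opening transformation: for each edge on which \eqref{eq:feas1} fails (necessarily a saturated edge of $\sigma$), it subtracts $\sigma'(a,b)$ from $\nd_a$, $\ns_b$, $\sigma(a,b)$, and $\sigma'(a,b)$, zeroing out $\sigma'(a,b)$ while preserving $w(\sigma)-w(\sigma')$ and keeping $\sigma$ $1$-optimal and $\sigma'$ optimal for the reduced instance. After this preprocessing every edge with $\sigma'(a,b)>0$ satisfies \eqref{eq:feas1}, and then your dual-weight accounting goes through verbatim. So the fix is local: insert the same transformation (or an equivalent argument that handles saturated edges separately) before the lower-bound step, and the rest of your proof is correct and matches the paper's.
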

 
 In the rest of this section, we describe an algorithm to compute a $1$-optimal transport plan that satisfies (C). To assist in describing this algorithm, we introduce a few definitions.
 
 
 For any $1$-feasible transport plan $\sigma$, we construct a directed \emph{residual graph} with the vertex set $A\cup B$ and denote it by $\dir{G}_{\sigma}$. The edge set of $\dir{G}_{\sigma}$ is defined as follows: For any $(a,b) \in A\times B$ if $\sigma(a,b) =0$, we add an edge directed from $b$ to $a$ and set its \emph{residual capacity} to be $\min\{\nd_a,\ns_b\}$. Otherwise, if $\sigma(a,b)= \min\{\nd_a, \ns_b\}$, we add an edge from $a$ to $b$ with a residual capacity of $\sigma(a,b)$. In all other cases, i.e., $0 < \sigma(a,b) < \min\{\nd_a,\ns_b\}$, we add an edge from $a$ to $b$ with a residual capacity of $\sigma(a,b)$ and an edge from $b$ to $a$ with a residual capacity of $\min\{\nd_a,\ns_b\} - \sigma(a,b)$. Any edge of $\dir{G}_{\sigma}$ directed from $a \in A$ to $b\in B$ is called a \emph{backward} edge and any edge directed from $b \in B$ to $a \in A$ is called a \emph{forward} edge. We set the cost of any edge between $a$ and $b$ regardless of their direction to be $\nc(a,b) = \lfloor 2c(a,b)/\delta'\rfloor$.
 Any directed path in the residual network starting from a free supply vertex to a free demand vertex is called an \emph{augmenting path}. Note that the augmenting path alternates between forward and backward edges with the first and the last edge of the path being a forward edge. We can augment the supplies transported by $k \ge 1$ units along an augmenting path $P$ as follows. For every forward edge $(a,b)$ on the path $P$, we raise the flow $\sigma(a,b) \leftarrow \sigma(a,b)+k$. For every backward edge $(a,b)$ on the path $P$, we reduce the flow $\sigma(a,b) \leftarrow \sigma(a,b)-k$. We define slack on any edge between $a$ and $b$ in the residual network as 
 \begin{align}
    &s(a,b) = \nc(a,b) + 1 - y(a) - y(b) &\text{if } (a,b) \text{ is a forward edge,} \label{eq:slack1}\\
    &s(a,b) = y(a) + y(b) - \nc(a,b) &\text{ if } (a,b) \text{ is a backward edge}\label{eq:slack2}
\end{align}
Finally, we define any edge $(a,b)$ in $\dir{G}_\sigma$ as admissible if $s(a,b) =0$. The \emph{admissible graph} $\dir{\mathcal{A}}_{\sigma}$ is the subgraph of $\dir{G}_\sigma$ consisting of the admissible edges of the residual graph.
\subsection{The algorithm} 
Initially $\sigma$ is a transport plan where, for every edge $(a,b) \in A\times B$, $\sigma(a,b)=0$. We set the dual weights of every vertex $v \in A\cup B$ to $0$, i.e., $y(v)=0$. Note that $\sigma$ and $y(\cdot)$ together form a $1$-feasible transportation plan. Our algorithm executes in \emph{phases} and terminates when $\sigma$ becomes a maximum transport plan. Within each phase there are two \emph{steps}. 
In the first step, the algorithm conducts a Hungarian Search and adjusts the dual weights so that there is at least one augmenting path of admissible edges. In the second step, the algorithm computes at least one augmenting path and updates $\sigma$ by augmenting it along all paths computed. At the end of the second step, we guarantee that there is no augmenting path of admissible edges. The details are presented next.

{\bf First step (Hungarian Search):}
To conduct a Hungarian Search, we add two additional vertices $s$ and $t$ to the residual network.
We add edges directed from $s$ to every free supply node, i.e., nodes with $\sum_{a\in A} \sigma(a,b) < \ns_b$. We add edges from every free demand vertex to $t$. All edges incident on $s$ and $t$ are given a weight $0$. The weight of every other edge $(a,b)$ of the residual network is set to its slack $s(a,b)$ based on its direction. We refer to the residual graph with the additional two vertices as the \emph{augmented residual network} and denote it by $\mathcal{G}_{\sigma}$. We execute Dijkstra's algorithm from $s$ in the augmented residual network $\mathcal{G}_{\sigma}$. For any vertex $v \in A \cup B$, let $\ell_v$ be the shortest path from $s$ to $v$ in $\mathcal{G}_\sigma$. Next, the algorithm performs a dual weight adjustment. For any vertex $v \in A\cup B$, if $\ell_v \ge \ell_t$, the dual weight of $v$ remains unchanged. Otherwise, if $\ell_v < \ell_t$, we update the dual weight as follows:
    {\bf(U1):} If $v \in A$, we set $y(v) \leftarrow y(v) - \ell_t + \ell_v$,
    {\bf(U2):} Otherwise, if $v \in B$, we set $y(v) \leftarrow y(v) + \ell_t - \ell_v$.

This completes the description of the first step of the algorithm. The dual updates guarantee that, at the end of this step, the transport plan $\sigma$ along with the updated dual weights remain $1$-feasible and there is at least one augmenting path in the admissible graph.

{\bf Second step (partial DFS):} Initially $\mathcal{A}$ is set to the admissible graph, i.e., $\mathcal{A} \leftarrow \dir{\mathcal{A}}_{\sigma}$. Let $X$ denote the set of free supply nodes in $\mathcal{A}$. The second step of the algorithm will iteratively initiate a DFS from each supply node of $X$ in the graph $\mathcal{A}$.  We describe the procedure for one free supply node $b \in X$. During the execution of DFS from $b$, if a free demand node is visited, then an augmenting path $P$ is found, the DFS terminates immediately, and the algorithm deletes all edges visited by the DFS, except for the edges of $P$. The \augment\ procedure augments $\sigma$ along $P$ and updates $X$ to denote the free supply nodes in $\mathcal{A}$.  Otherwise, if the DFS ends without finding an augmenting path, then the algorithm deletes all vertices and edges that were visited by the DFS from $\mathcal{A}$ and updates $X$ to represent the free supply nodes remaining in $\mathcal{A}$. The second step ends when $X$ becomes empty.

{\bf Augment\ procedure:} For any augmenting path $P$ starting at a free supply vertex $b \in B_F$ and ending at a free demand vertex $a \in A_F$, its \emph{bottleneck edge set} is the set of all edges $(u,v)$ on $P$ with the smallest residual capacity. Let $bc(P)$ denote the capacity of any edge in the bottleneck edge set. The bottleneck capacity $r_P$ of $P$ is the smallest of the total remaining supply at $b$, the total remaining demand at $a$, and the residual capacity of its bottleneck edge, i.e., $r_P=\min\{\ns_b-\sum_{a'\in A}\sigma(a',b), \nd_a - \sum_{b'\in B}\sigma(a,b'), bc(P)\}$. The algorithm augments along $P$ by updating $\sigma$ as follows. For every forward edge $(a',b')$, we set $\sigma(a',b') \leftarrow \sigma(a',b') + r_P$, and, for every backward edge $(a',b')$, $\sigma(a',b') \leftarrow \sigma(a',b') -r_P$. The algorithm then updates the residual network and the admissible graph to reflect the new transport plan. 



{\bf Invariants:} The following invariants (proofs of which are in the supplement) hold during the execution of the algorithm. {\bf (I1):} The algorithm maintains a $1$-feasible transport plan, and,
{\bf (I2)} In each phase, the partial DFS step computes at least one augmenting path. Furthermore, at the end of the partial DFS, there is no augmenting path in the admissible graph. 

{\bf Correctness:}  From (I2), the algorithm augments, in each phase, the transport plan by at least one unit of supply. Therefore, when the algorithm terminates we have a $1$-feasible (from (I1)) maximum transport plan, i.e., $1$-optimal transport plan. Next, we show that any transport plan maintained by the algorithm will satisfy condition (C). 
 For $v \in A$, initially $y(v) =0$. In any phase, suppose $\ell_v < \ell_t$. Then, the Hungarian Search updates the dual weights using condition (U1) which reduces the dual weight of $v$. Therefore, $y(v) \le 0$. 
 
Next, we show that all free vertices of $A$ have a dual weight of $0$. The claim is true initially. During the course of the algorithm, any vertex $a \in A$ whose demand is met can no longer become free. Therefore, it is sufficient to argue that no free demand vertex experiences a dual adjustment. By construction, there is a directed edge from $v$ to $t$ with zero cost in $\mathcal{G}_{\sigma}$. Therefore, $\ell_t \le \ell_v$ and the algorithm will not update the dual weight of $v$ during the phase. As a result the algorithm maintains $y(v) =0$ for every free demand vertex and (C) holds. 

When the algorithm terminates, we obtain a $1$-optimal transport plan $\sigma$ which satisfies (C). From Lemma~\ref{lem:optcost}, it follows that $w(\sigma) \le w(\sigma'_{\mathrm{OPT}})+\NU\delta'$ as desired. 
The following lemma helps in achieving a diameter sensitive analysis of our algorithm. 
 \begin{lemma}
 \label{lem:freedual}
 The dual weight of any free supply node $v \in B_F$ is at most $\lfloor 2C/\delta'\rfloor + 1$.
 \end{lemma}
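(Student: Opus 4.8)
The plan is to derive the bound from the $1$-feasibility condition~\eqref{eq:feas1} applied to a single, carefully chosen edge incident on $v$, together with condition (C). Fix any point during the execution of the algorithm and let $v=b\in B_F$. Since $b$ is free, $\sum_{a\in A}\sigma(a,b)<\ns_b$; in particular $\sigma(a,b)<\ns_b$ for every $a\in A$. So the only thing that stands between us and applying~\eqref{eq:feas1} to some edge $(a,b)$ is exhibiting an $a\in A$ with $\sigma(a,b)<\nd_a$ as well, which then yields $\sigma(a,b)<\min\{\ns_b,\nd_a\}$.

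The substantive step is therefore to produce such an $a$. I would argue by contradiction: since $\sigma(a,b)\le\sum_{b'\in B}\sigma(a,b')\le\nd_a$ always holds, if no $a$ had $\sigma(a,b)<\nd_a$ then $\sigma(a,b)=\nd_a$ for every $a\in A$, so $\sum_{a\in A}\sigma(a,b)=\sum_{a\in A}\nd_a$. But in the scaled instance $\mathcal{I}'$ the total supply is at most the total demand (Section~\ref{sec:preprocessing}), hence $\sum_{a\in A}\nd_a\ge\NU\ge\ns_b$, contradicting $\sum_{a\in A}\sigma(a,b)<\ns_b$. Thus a suitable $a$ exists.

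Given such an $a$, invariant (I1) together with~\eqref{eq:feas1} gives $y(a)+y(b)\le\nc(a,b)+1$. Since $c(a,b)\le C$ implies $\nc(a,b)=\lfloor 2c(a,b)/\delta'\rfloor\le\lfloor 2C/\delta'\rfloor$, and since condition (C) guarantees $y(a)\le 0$, we obtain $y(b)\le\lfloor 2C/\delta'\rfloor+1$, as claimed. The only non-bookkeeping point is the existence of the unsaturated incident edge, and this is precisely where the global balance inequality $\NU\le\sum_{a\in A}\nd_a$ for $\mathcal{I}'$ is used; it is also the reason the argument survives the unbalanced instance produced by our preprocessing, unlike in the balanced setting of Gabow--Tarjan.
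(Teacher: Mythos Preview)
Your proof is correct and follows essentially the same approach as the paper: apply the $1$-feasibility condition~\eqref{eq:feas1} to an edge $(a,b)$ with $b$ the free supply node, control $y(a)$ via condition (C), and bound $\nc(a,b)\le\lfloor 2C/\delta'\rfloor$. The only difference is in the choice of witness: the paper takes $a$ to be a \emph{free} demand node (so $y(a)=0$ by the strong part of (C)), whereas you take any $a$ with $\sigma(a,b)<\nd_a$ and invoke only the weaker $y(a)\le 0$; your existence argument for such an $a$ is more explicit than the paper's (which tacitly relies on the existence of a free demand node), but the underlying idea is the same.
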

\begin{proof}
 For the sake of contradiction, suppose the free supply node $v \in B_F$ has a dual weight $y(b) \ge \lfloor 2C/\delta'\rfloor + 2$. For any free demand node, say $a \in A_F$ has a dual weight $y(a)=0$ (from (C)). Then, $y(a) + y(b) \ge \lfloor 2C/\delta'\rfloor +2 \ge \nc(a,b) + 2$, and the edge $(a,b)$ violates $1$-feasibility condition~\eqref{eq:feas1} leading to a contradiction.   
 \end{proof}
{\bf Efficiency:} Let $\mathbb{P}_j$ be the set of all augmenting paths computed in phase $j$ and let $\mathbb{P}$ be the set of all augmenting paths computed by the algorithm across all phases. To bound the execution time of the algorithm, we bound, in Lemma~\ref{lem:phases}, the total number of phases by $\lfloor 2C/\delta'\rfloor + 1$. Within each phase, the Hungarian search step executes a single Dijkstra search which takes $\BigO(n^2)$ time. To bound the time taken by the partial DFS step, observe that any edge visited by the DFS is deleted provided it does not lie on an augmenting path. Edges that lie on an augmenting path, however, can be visited again by another DFS within the same phase. Therefore, the total time taken by the partial DFS step in any phase $j$ is bounded by $\BigO(n^2 + \sum_{P\in \mathbb{P}_j}|P|)$; here $|P|$ is the number of edges on the augmenting path $P$. Across all $\BigO(C/\delta')$ phases, the total time taken is $\BigO((C/\delta')n^2 + \sum_{P\in \mathbb{P}}|P|)$. In Lemma~\ref{lem:augpathlength}, we bound the total length of the augmenting paths by $\BigO( \frac{n}{\eps(1-\eps)}(C/\delta)^2)$. Therefore, the total execution time of the algorithm is $\BigO(\frac{n^2C}{(1-\eps)\delta} + \frac{nC^2}{\eps(1-\eps)\delta^2})$. 
 \begin{lemma}
 \label{lem:phases}
The total number of phases in our algorithm is at most $\lfloor 2C/\delta'\rfloor + 1$. 
 \end{lemma}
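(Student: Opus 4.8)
The plan is to bound the number of phases by tracking the dual weight of a single free supply node. Let $J$ denote the total number of phases. Since the algorithm executes a phase only while $\sigma$ is not maximum, the set $B_F$ of free supply nodes is nonempty at the start of phase $J$; I would fix any $b^* \in B_F$ at that moment. The first step is to show that $b^*$ is in fact free at the start of \emph{every} phase $1, \dots, J$. This uses two facts: every supply node with positive supply is free under the initial plan $\sigma \equiv 0$, and no augmentation can turn a non-free supply node free again. For the latter, note that on an augmenting path an internal supply node is entered by a backward edge and left by a forward edge, so its routed supply $\sum_{a}\sigma(a,\cdot)$ is unchanged, while the supply endpoint of an augmenting path is free by definition. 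Hence $b^*$ is free throughout the run, and in particular it is still free immediately after the Hungarian Search of phase $J$, since that step alters only dual weights.

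The second step is to show that each Hungarian Search raises $y(b^*)$ by at least $1$. For this I would establish that at the start of every phase $\ell_t \ge 1$ in the augmented residual network $\mathcal{G}_\sigma$. In phase $1$ this holds because, with $y\equiv 0$ and $\sigma\equiv 0$, every edge of the residual graph is a forward edge of slack $\nc(a,b)+1\ge 1$, so every $s$-$t$ path has cost at least $1$; in any later phase it follows from invariant (I2), since an admissible augmenting path corresponds exactly to a zero-cost $s$-$t$ path in $\mathcal{G}_\sigma$ and the slacks serving as Dijkstra's edge weights are nonnegative integers. Because $b^*$ is free, the zero-cost edge $s \to b^*$ forces $\ell_{b^*}=0 < \ell_t$, so update (U2) applies to $b^*$ and sets $y(b^*)\leftarrow y(b^*)+\ell_t-\ell_{b^*}=y(b^*)+\ell_t \ge y(b^*)+1$.

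Combining the two steps: $y(b^*)$ starts at $0$ and gains at least $1$ in each of the $J$ Hungarian Searches, so $y(b^*)\ge J$ right after the Hungarian Search of phase $J$, at which point $b^*$ is still free. Lemma~\ref{lem:freedual} then bounds $y(b^*)\le \lfloor 2C/\delta'\rfloor + 1$, giving $J\le \lfloor 2C/\delta'\rfloor + 1$. I expect the main obstacle to be the first step, namely pinning down a single supply node that remains free across all phases so that its dual weight is monotone and Lemma~\ref{lem:freedual} applies at the end; once that structural point is settled, the $\ell_t\ge 1$ claim is routine given invariant (I2) and the feasibility bounds \eqref{eq:feas1}--\eqref{eq:feas2}, and the remainder is arithmetic.
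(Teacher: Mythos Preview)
Your proposal is correct and follows essentially the same approach as the paper's proof: fix a supply node that is free in the last phase, observe it was free in every earlier phase, show each Hungarian Search raises its dual weight by at least one via $\ell_{b^*}=0<\ell_t$, and then invoke Lemma~\ref{lem:freedual} for the contradiction/bound. Your write-up is in fact slightly more careful than the paper's---you justify the persistence of freeness of $b^*$ explicitly and handle the $\ell_t\ge 1$ claim for phase~$1$ separately (where (I2) does not literally apply yet)---but these are refinements of the same argument rather than a different route.
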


 \begin{proof}
 At the start of any phase, from (I2), there are no admissible augmenting paths. Therefore, any path from $s$ to $t$ in the augmented residual network $\mathcal{G}_\sigma$ will have a cost of at least $1$, i.e., $\ell_t \ge 1$. During any phase, let $b \in B_F$ be any free supply vertex. Note that $b$ is also a free supply vertex in all prior phases. Since there is a direct edge from $s$ to $b$ with a cost of $0$ in $\mathcal{A}$, $\ell_b = 0$. Since $\ell_t \ge 1$, from (U2), the dual weight of $b$ increases by at least $1$. After $\lfloor 2C/\delta'\rfloor + 2$ phases, the dual weight of any free vertex will be at least $\lfloor 2C/\delta'\rfloor + 2$ which contradicts Lemma~\ref{lem:freedual}.
 \end{proof}
 
 \begin{lemma}
 \label{lem:augpathlength}
 Let $\mathbb{P}$ be the set of all augmenting paths produced by the algorithm. Then
 $\sum_{P \in \mathbb{P}}|P| = \BigO(\frac{nC^2}{\eps(1-\eps)\delta^2})$; here $|P|$ is the number of edges on the path $P$.
 \end{lemma}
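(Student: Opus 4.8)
The plan is to control the \emph{weighted} sum $\sum_{P\in\mathbb{P}} r_P|P|$, where $r_P$ is the amount of supply augmented along $P$, and then recover $\sum_{P\in\mathbb{P}}|P|$ using the fact that every $r_P$ is an integer at least $1$. I would bound the weighted sum by an accounting argument against the scaled cost $\nw(\sigma)=\sum_{(a,b)}\sigma(a,b)\nc(a,b)$ of the current transport plan: it starts at $0$, stays non-negative, and the sum of its increments over the whole run can be charged to the dual weights of free supply nodes, which Lemma~\ref{lem:freedual} caps at $\lfloor 2C/\delta'\rfloor+1$.

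In detail, fix an augmenting path $P=(b_0\to a_1\to b_1\to\cdots\to b_t\to a_{t+1})$ processed by \augment; it alternates forward and backward edges, has $t+1$ forward edges and $t$ backward edges, so $|P|=2t+1$. Every edge of $P$ is admissible, so the slack definitions~\eqref{eq:slack1}--\eqref{eq:slack2} give $\nc(a,b)=y(a)+y(b)-1$ on forward edges of $P$ and $\nc(a,b)=y(a)+y(b)$ on backward edges of $P$. Summing with signs along the alternating path, the dual weights of all internal vertices cancel and the net cost becomes
\[
\sum_{\text{fwd }(a,b)\in P}\nc(a,b)\;-\;\sum_{\text{bwd }(a,b)\in P}\nc(a,b)\;=\;y(b_0)+y(a_{t+1})-(t+1).
\]
Since $a_{t+1}$ is a free demand node, condition (C) gives $y(a_{t+1})=0$, so this net cost equals $y(b_0)-(|P|+1)/2$. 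Augmenting $r_P$ units along $P$ raises $\sigma$ by $r_P$ on every forward edge and lowers it by $r_P$ on every backward edge, hence $\nw(\sigma)$ increases by exactly $r_P\bigl(y(b_0)-(|P|+1)/2\bigr)$. Summing this over all paths of $\mathbb{P}$ in processing order, the left-hand side telescopes to $\nw(\sigma_{\mathrm{final}})\ge 0$; moreover $b_0$ is free at the instant $P$ is augmented (dual weights are frozen during the partial DFS step), so Lemma~\ref{lem:freedual} gives $y(b_0)\le\lfloor 2C/\delta'\rfloor+1$. Rearranging yields
\[
\sum_{P\in\mathbb{P}} r_P\,\frac{|P|+1}{2}\;\le\;\Bigl(\Bigl\lfloor\tfrac{2C}{\delta'}\Bigr\rfloor+1\Bigr)\sum_{P\in\mathbb{P}} r_P .
\]

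Two final steps complete the proof. First, since $r_P\ge 1$ we have $\frac{|P|+1}{2}\le r_P\frac{|P|+1}{2}$, so $\sum_{P}|P|\le\sum_{P}(|P|+1)\le 2(\lfloor 2C/\delta'\rfloor+1)\sum_{P}r_P$. Second, an augmenting path has exactly one more forward edge than backward edge, so augmenting $r_P$ units along it raises the total transported supply $\sum_{(a,b)}\sigma(a,b)$ by precisely $(t+1)r_P-t\,r_P=r_P$; this quantity starts at $0$ and never exceeds $\NU$, hence $\sum_{P\in\mathbb{P}}r_P\le\NU$. Plugging in $\NU\le\alpha U=\tfrac{2nC}{\eps\delta}$ (from Section~\ref{sec:preprocessing}) and $\delta'=(1-\eps)\delta$ gives $\sum_{P\in\mathbb{P}}|P|\le 2(\lfloor\tfrac{2C}{(1-\eps)\delta}\rfloor+1)\cdot\tfrac{2nC}{\eps\delta}=\BigO(\tfrac{nC^2}{\eps(1-\eps)\delta^2})$.

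I do not expect a serious obstacle; the points that need care are (i) invoking Lemma~\ref{lem:freedual} at the right moment — when $P$ is augmented $b_0$ is still free, and because the dual weights do not change during the partial DFS step the bound applies to every path produced in a phase — and (ii) observing that, although augmenting paths may traverse backward (flow-cancelling) edges, the \emph{net} supply pushed per augmentation is exactly $r_P$, which is what keeps $\sum_{P}r_P$ from exceeding $\NU$.
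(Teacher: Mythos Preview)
Your proof is correct and follows essentially the same route as the paper: you telescope the change in $\nw(\sigma)$ across all augmentations, use admissibility plus condition~(C) to express the per-path increment as $r_P\bigl(y(b_0)-\lceil|P|/2\rceil\bigr)$, bound $y(b_0)$ via Lemma~\ref{lem:freedual}, and finish with $r_P\ge 1$, $\sum_P r_P\le \NU$, and $\NU\le \alpha U$. The only cosmetic difference is that the paper packages the ``$+1$'' on forward edges into its net-cost functional $\Phi(P)$ so that $\Phi(P)=y(b_0)$ directly, whereas you keep the $-(|P|+1)/2$ term separate; the two bookkeeping choices are equivalent.
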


\section{Experimental Results}
\label{sec:experiments}
\begin{figure}
\centering
\includegraphics[width=0.75\textwidth]{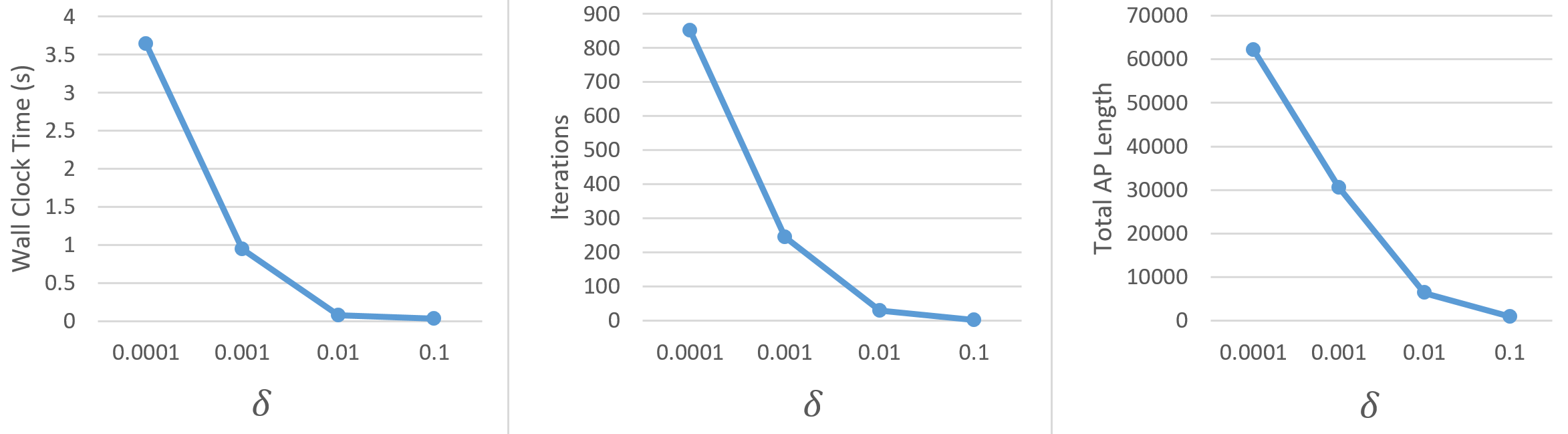}
\caption{Efficiency statistics for our algorithm when executed on very small $\delta$ values.}
\label{fig:smallDelta}
\end{figure}
In this section, we investigate the practical performance of our algorithm. We test an implementation of our algorithm\footnote{Our implementation is available at https://github.com/nathaniellahn/CombinatorialOptimalTransport.}, written in Java, on discrete probability distributions derived from real-world image data. The testing code is written in MATLAB, and calls the compiled Java code. All tests are executed on computer with a 2.40 GHz Intel Dual Core i5 processor and 8GB of RAM, using a single computation thread. We implement a worst-case asymptotically optimal algorithm as presented in this paper and fix the value of $\eps$ as $0.5$.
We compare our implementation to existing implementations of the Sinkhorn, Greenkhorn\footnote{Sinkhorn/Greenkhorn implementations retrieved from https://github.com/JasonAltschuler/OptimalTransportNIPS17.}, and APDAGD\footnote{APDAGD implementation retrieved from https://github.com/chervud/AGD-vs-Sinkhorn.} algorithms, all of which are written in MATLAB. Unless otherwise stated, we set all parameters of these algorithms to values prescribed by the theoretical analysis presented in the respective papers.

All the tests are conducted on real-world data generated by randomly selecting pairs of images from the MNIST data set of handwritten digits. We set supplies and demands based on pixel intensities, and normalize such that the total supply and demand are both equal to $1$. The cost of an edge is assigned based on squared-Euclidean distance between pixel coordinates, and costs are scaled such that the maximum edge cost $C=1$. The MNIST images are $28 \times 28$ pixels, implying each image has $784$ pixels.  

In the first set of tests, we compare the empirical performance of our algorithm to its theoretical analysis from Section~\ref{Sec:algorithm}. We execute $100$ runs, where each run executes our algorithm on a randomly selected pair of MNIST images for  $\delta\in [0.0001, 0.1]$. For each value of $\delta$, we record the wall-clock running time, iteration count, and total augmenting path length of our algorithm. These values, averaged over all runs, are plotted in Figure ~\ref{fig:smallDelta}. We observe that the number of iterations is significantly less than the theoretical bound of roughly $\frac{2}{(1-\eps)\delta} = 4/\delta$. We also observe that the total augmenting path length is significantly smaller than the worst case bound of $n/\delta^2$ for even for the very small $\delta$ value of $0.0001$.  This is because of two reasons. First, the inequality~(8) (Proof of Lemma~2.4 in the supplement) used in bounding the augmenting path length is rarely tight; most augmenting paths have large slack with respect to this inequality. Second, to obtain the worst case bound, we assume that only one unit of flow is pushed through each augmenting path (see Proof of Lemma~2.4). However, in our experiments augmenting paths increased flow by a larger value whenever possible. As a result, we noticed that the total time taken by augmentations, even for the smallest value of $\delta$, was negligible and $\le 2\%$ of the execution time.

\begin{figure}
\begin{tabular}{c|c}
\includegraphics[width=0.40\textwidth]{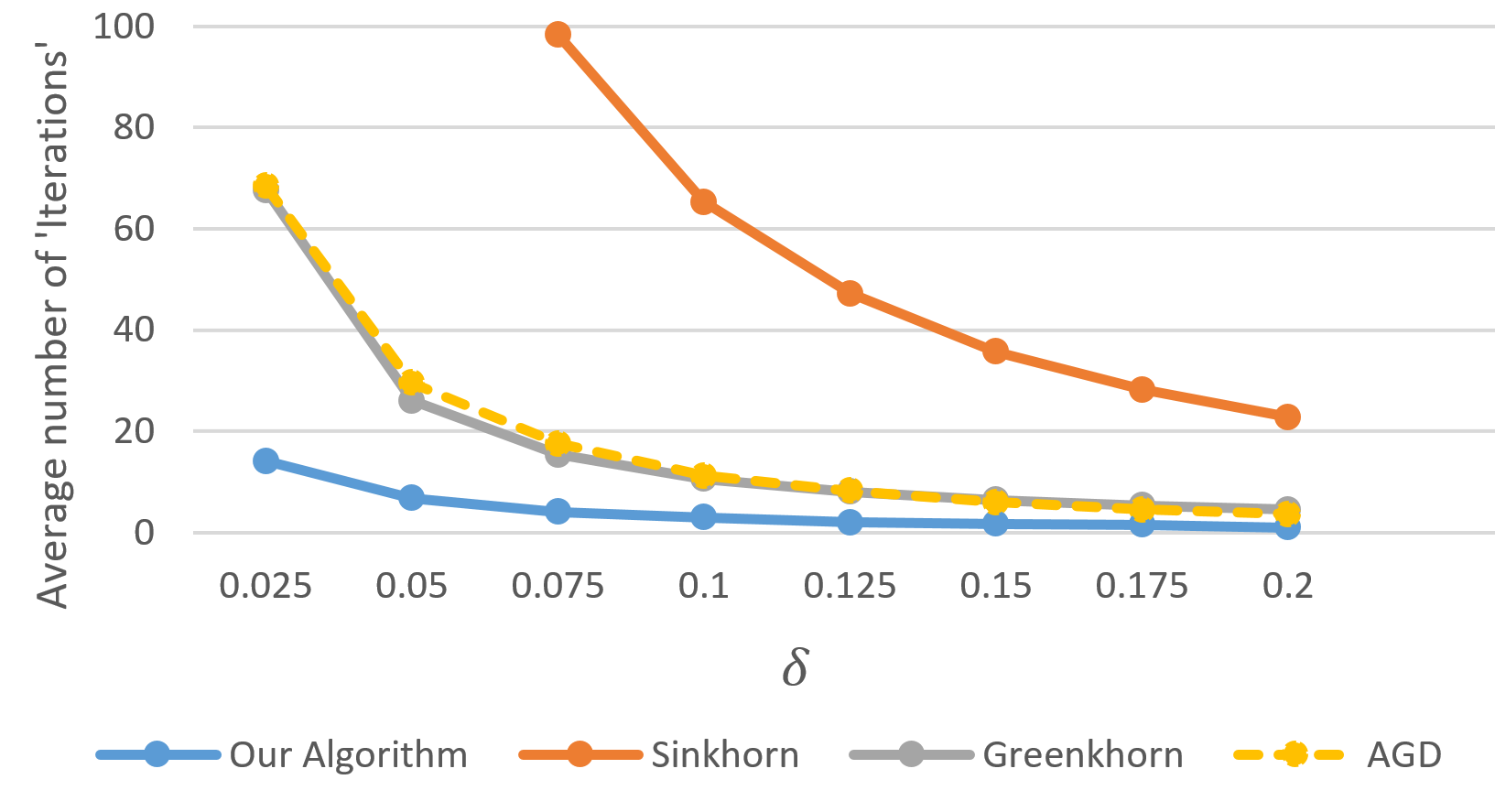} &
\includegraphics[width=0.53\textwidth]{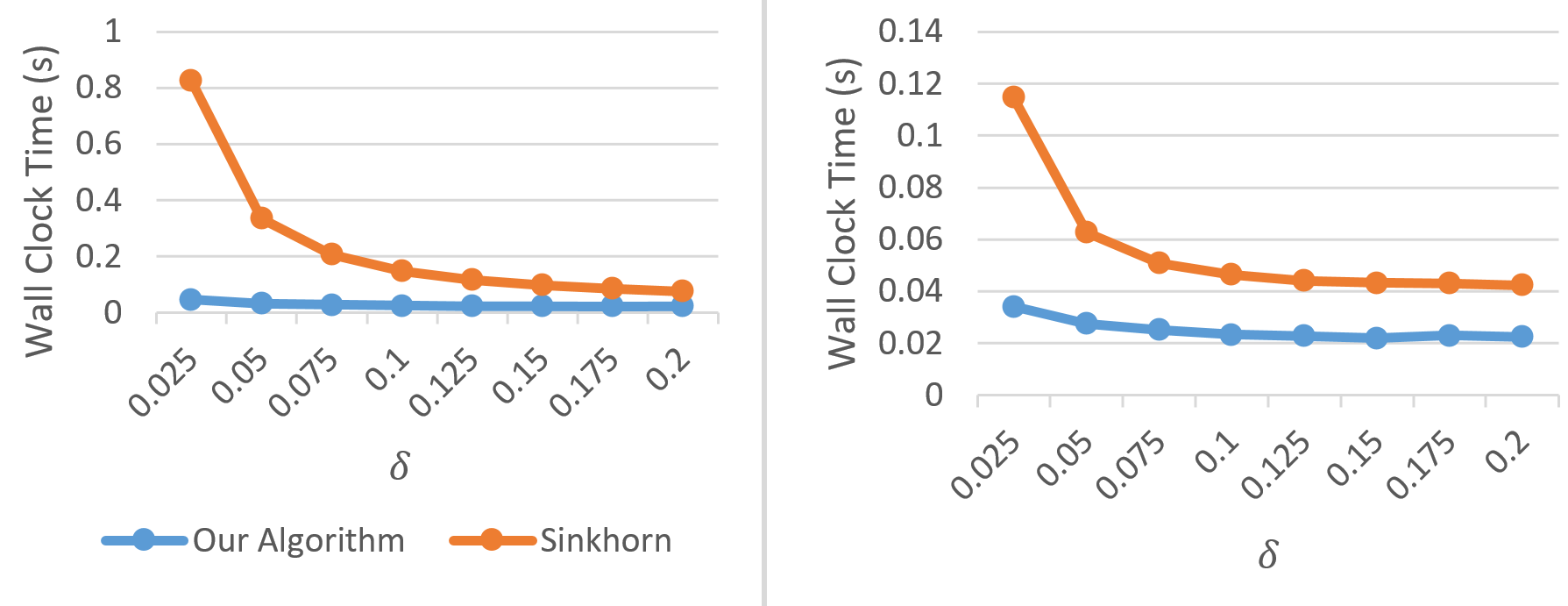}\\
(a) & (b)
\end{tabular}
\caption{\label{fig:iterations}(a) A comparison of the number of iterations executed by various algorithms for moderate values of $\delta$; (b) A comparison of  our algorithm with the Sinkhorn algorithm using several $\delta$ values. We compare running times when both algorithms receive $\delta$ as input (left) and compare the running times when Sinkhorn receives $5\delta$ and our algorithm receives $\delta$ (right).}
\vspace{-1em}
\end{figure}
Next, we compare the number of iterations executed by our algorithm with the number of iterations executed by the Sinkhorn, Greenkhorn, and APDAGD algorithms.
Here, by `iteration' we refer to the the logical division of each algorithm into portions that take $\BigO(n^2)$ time. For example, an iteration of the Greenkhorn algorithm corresponds to $n$ row/column updates. An iteration for our algorithm corresponds to a single phase. We execute 10 runs, where each run selects a random pair of MNIST images and executes all four algorithms using $\delta$ values in the range $[0.025, 0.2]$.

Figure~\ref{fig:iterations}(a) depicts the average number of iterations for each algorithm. For the Sinkhorn, Greenkhorn and APDAGD algorithms, we see a significant increase in iterations as we choose smaller values of $\delta$. We believe this is because of numerical instability associated with these algorithms. Unlike these algorithms, our algorithm runs fairly quickly for very small values of $\delta$ (see Figure~\ref{fig:smallDelta}).

We repeat the test from Figure~\ref{fig:iterations}(a), and compare the average time taken to produce the result. The code for Greenkhorn and APDAGD are not optimized for actual running time, so we restrict to comparing our algorithm with the Sinkhorn algorithm, and record average wall-clock running time. The results of executing $100$ runs are plotted on the left in Figure~\ref{fig:iterations}(b). Under these conditions, we observe that the time taken by our algorithm is significantly less than that taken by the Sinkhorn algorithm. 
The cost of the solution produced by our algorithm, although within the error parameter $\delta$, was not always better than that of the cost of the solution produced by Sinkhorn. We repeat the same experimental setup, except with Sinkhorn receiving an error parameter of $5\delta$ while our algorithm continues to receive $\delta$. Our algorithm continues to have a better average execution time than Sinkhorn (plot on the right in Figure~\ref{fig:iterations}(b)).
Moreover, we also observe that cost of solution produced by Sinkhorn is higher than the solution produced by our algorithm for every run across all values of $\delta$. In other words, for this experiment, our algorithm executed faster and produced a better quality solution than the Sinkhorn algorithm.
\section{Extensions and Conclusion}
\label{sec:extensions}
We presented an $\BigO(n^2(C/\delta)+n(C/\delta)^2)$ time algorithm to compute a $\delta$-close approximation of the optimal transport. Our algorithm is an execution of a single scale of Gabow and Tarjan's algorithm for appropriately scaled integer demands, supplies and costs. Our key technical contribution is a diameter sensitive analysis of the execution time of this algorithm.

In~\cite[Section~3.1, 3.2]{sharathSODA12}, it has been shown that the first and the second steps of our algorithm, i.e., Hungarian search and partial DFS, can be executed on a residual graph with costs $\nc(\cdot,\cdot)$ in $\BigO(n\Phi(n)\log^2 n)$ and $\BigO(n\Phi(n))$ time respectively; here $\Phi(n)$ is the query and update time of a dynamic weighted nearest neighbor data structure with respect to the cost function $c(\cdot,\cdot)$. Unlike in~\cite{sharathSODA12} where the costs, after scaling down by a factor $\delta'$, are rounded up, $\nc(\cdot,\cdot)$ is obtained by scaling down $c(\cdot,\cdot)$ by $\delta'$, and is rounded down. This, however, does not affect the critical lemma~\cite[Lemma~3.2]{sharathSODA12} and the result continues to hold.  Several distances including the Euclidean distance and the squared-Euclidean distance admit such a dynamic weighted nearest neighbor data structure for planar point sets with poly-logarithmic query and update time~\cite{aes99, Kaplan_dynamic_soda17}. Therefore, we immediately obtain a $\BigOT(n(C/\delta)^2)$ time algorithm to compute a $\delta$-close approximation of the optimal transport for such distances. In~\cite[Section~4~(i)--(iii)]{sa_stoc_14}, a similar link is made between an approximate nearest neighbor (ANN) data structure and a relative approximation algorithm. 

The Sinkhorn algorithm scales well in parallel settings because the row and column update operations within each iteration can be easily parallelized. In the first step, our algorithm uses Dijkstra's method that is inherently sequential. However, there is an alternate implementation of a single scale of Gabow and Tarjan's algorithm that does not require Dijkstra's search (see~\cite[Section 2.1]{lr_soda19}) and may be more amenable to parallel implementation. We conclude with the following open questions:
\begin{itemize}
    \item Can we design a combinatorial algorithm that uses approximate nearest neighbor data structure and produces a $\delta$-close transport plan for geometric costs in $\BigOT(n(C/\delta)^d)$ time?     
    \item Can we design a parallel combinatorial approximation algorithm that produces a $\delta$-close optimal transport for arbitrary costs? 
\end{itemize}

{\bf Acknowledgements:} Research presented in this paper was funded by NSF CCF-1909171. We would like to thank the anonymous reviewers for their useful feedback. All authors contributed equally to this research.
\bibliographystyle{abbrvnat}
\bibliography{bibliography1}
\appendix
\newpage
\gdef\thesection{Supplement \Alph{section}}

\section{Validity of the transport plan after augmentation}
Note that the updated transport plan after each augmentation is valid. Augmenting along $P$ by the residual capacity $r_P$ will maintain a feasible flow and so, for any edge $(a,b) \in P$, $\sigma(a,b) \ge 0$. By our choice, $r_P$ is lower  than the remaining demand at $a$ and the unused supply at $b$. Therefore, after augmentation, the supplies transported to $a$ (resp. from $b$) increases by $r_P$ and remains at most $\nd_a$ (resp. $\ns_b$).

For any other demand (resp. supply) node $a' \in A\cap P$ (resp. $b' \in B\cap P$) with $a' \neq a$ (resp. $b'\neq b$), the total supplies transported to $a'$ (resp. from $b'$) after the transport plan is updated remains unchanged.  This is because $a'$ (resp. $b'$) has exactly one forward and one backward edge of $P$ incident on it. The increase in supply transported to $a'$ (resp. from $b'$) via the forward edge is canceled out by the decrease in supply transported along the backward edge.

\section{Proof of Lemma 1.1}
\begin{proof}
To prove our claim, it suffices to construct a maximum transport plan $\sigma'$ for $\mathcal{I}'$ such that $w(\sigma') \le \alpha w(\sigma^*)$. 

For any $(a,b) \in A \times B$, we define $\sigma'(a,b) = \alpha \sigma^*(a,b)$. Note that the transport plan $\sigma'$ is not necessarily valid for $\mathcal{I}'$. This is because the total flow of supplies from any supply node $b \in B$ exceeds $\ns_b$ by  $$\kappa_b = \sum_{a \in A}\sigma'(a,b) - \ns_b = \sum_{a\in A}\alpha \sigma^*(a,b) - \ns_b = \alpha s_b - \lfloor \alpha s_b\rfloor.$$ The third equality follows from the fact that $\sigma^*$ is a maximum transport plan and so, $\sum_{a\in A}\sigma^*(a,b) = s_b$. However, $\sigma'$ satisfies the demand constraints. 
Specifically, for any demand node $a \in A$, combining the facts that $\sum_{b \in B}\sigma^*(a,b) \leq d_a$ and $\sigma'(a,b) = \alpha \sigma^*(a,b)$ we get,
\[\sum_{b \in B}\sigma'(a,b) \leq \alpha d_a \leq \lceil\alpha d_a\rceil = \nd_a.\]
To make $\sigma'$ a valid maximum transportation plan, for every supply node $b \in B$, we iteratively choose an edge incident on $b$, say $(a,b)$ and reduce the flow of supplies along $(a,b)$ in $\sigma'(a,b)$ and the excess supply $\kappa_b$, by $\min\{\sigma'(a,b),\kappa_b\}$. We continue this iterative process until $\kappa_b$ reduces to $0$. By repeating this iterative process for every supply node, $\sigma'$ will  satisfy the supply constraints with equality and is a maximum transport plan for $\mathcal{I}'$. Furthermore, since the supply transported in $\sigma'$ along the edge $(a,b)$ is at most $\alpha \sigma^*(a,b)$
\[w(\sigma') = \sum_{(a,b) \in A \times B}\sigma'(a,b)c(a,b) \leq \sum_{(a,b) \in A \times B} \alpha \sigma^*(a,b)c(a,b) = \alpha w(\sigma^*).\]
\end{proof}

\section{Proof of Lemma 2.1}
\begin{proof}
 First, without loss of generality, we transform $\sigma$ and $\sigma'$ so that $\sigma$ remains a $1$-optimal transport plan and $\sigma'$ remains the minimum-cost maximum transport plan. Furthermore, this transformation guarantees that the dual weights for the $1$-optimal transport plan $\sigma$ are such that every edge $(a,b)$ for which the optimal transport plan has a positive flow, i.e., $\sigma'(a,b) > 0$, also satisfies~(3). We present this transformation next.
 If the dual weights for an edge $(a,b)$ do not satisfy~(3), then its flow $\sigma(a,b)$ is $\min\{s_b,d_a\}$. We  reduce $d_a$ and $s_b$ by $\sigma'(a,b)$ and also reduce the flow on the edge $(a,b)$ in $\sigma'$ to be $0$ and $\sigma$ to $\min\{s_b,d_a\}-\sigma'(a,b)$. The transformed $\sigma$ and $\sigma'$ continue to be $1$-optimal transport plan and minimum cost maximum transport plan respectively for the new demands and supplies. Moreover, their difference in costs of $\sigma$ and $\sigma'$ does not change due to this transformation and we are  guaranteed that if the edge $(a,b)$ has a positive flow with respect to $\sigma'$, i.e., $\sigma'(a,b) > 0$ then $(a,b)$ will satisfy~(3). We present the rest of the proof assuming that $\sigma$ and $\sigma'$ are transformed.

  The weight of $\sigma$ is 
  \begin{eqnarray}
  w(\sigma)&=&\sum_{(a,b) \in A\times B} \sigma(a,b)c(a,b) \le \sum_{(a,b) \in A\times B} (\delta'/2)(\sigma(a,b) (\lfloor 2c(a,b)/\delta'\rfloor + 1))\nonumber\\
  &\le& (\delta'/2) \sum_{a\in A}(\sum_{b\in B}\sigma(a,b)y(a)) + (\delta'/2)\sum_{b\in B}(\sum_{a\in A}\sigma(a,b)y(b)) + \delta'\NU/2.\label{eq:sigmacost}\end{eqnarray}
The last inequality follows from the fact that the dual weights for every edge $(a,b)$ with $\sigma(a,b) > 0$ satisfies~(4).  Note that, for any node $a \in A$, if $y(a) =0$, then $\sum_{b\in B}\sigma(a,b)y(a) =  \nd_ay(a)= 0$. For every node $a \in A$ with $y(a) < 0$,  by our assumption (C), $\sigma$ satisfies all the demands at $a$ and so, $\sum_{b\in B}\sigma(a,b) y(a) = \nd_a y(a)$. Therefore, the first term in the RHS of~\eqref{eq:sigmacost} can be written as $\sum_{a\in A}\nd_a y(a)$. 
 
 Since $\sigma$ is a maximum transport plan, the supplies available at each node $b \in B$ are completely transported by $\sigma$. Therefore, $\sum_{a\in A}\sigma(a,b) y(b) = \ns_by(b)$, and the second term in the RHS of~\eqref{eq:sigmacost} can be written as $\sum_{b\in B}\ns_b y(b)$. Combined together,
 \begin{equation}\label{eq:sigmacostfinal}w(\sigma) \le (\delta'/2)(\sum_{a\in A} \nd_a y(a) + \sum_{b\in B}\ns_b y(b) + \NU).\end{equation}
 The weight of the optimal transport plan $\sigma'$ can be written as
 $$ w(\sigma')= \sum_{(a,b) \in A\times B} \sigma'(a,b) c(a,b) \ge (\delta'/2)\sum_{(a,b) \in A\times B}\sigma(a,b)((\lfloor 2c(a,b)/\delta'\rfloor + 1) -1).  $$
 
 Due to the initial transformation, every edge that has a positive flow in $\sigma'$ satisfies $1$-feasibility condition~(3).  So, we can rewrite the above inequality as
 \begin{eqnarray}w(\sigma') 
 &\ge& (\delta'/2) \sum_{a \in A}(\sum_{b\in B} \sigma'(a,b)y(a))+ (\delta'/2)\sum_{b\in B}(\sum_{a\in A} \sigma'(a,b) y(b) ) - \delta'\NU/2.\label{eq:sigmaoptcost}\end{eqnarray} For each demand node $a\in A$, the total flow of supply coming in to $a$ cannot exceed $\nd_a$. Therefore, \[\sum_{b\in B} \sigma'(a,b) \le \nd_a.\] Since $y(a) \le 0$, we get \begin{equation}\label{eq:bound1}\sum_{b\in B}\sigma'(a,b)y(a) \ge \nd_ay(a).\end{equation} 
 Since $\sigma'$ is a maximum transport plan, for each supply node $b\in B$, the total flow of supply going out of $b$ is exactly equal to $\ns_b$. Therefore, \begin{equation}\label{eq:bound2}\sum_{a\in A}\sigma(a,b)y(b) = \ns_b y(b).\end{equation} Combining~\eqref{eq:bound1} and~\eqref{eq:bound2} with~\eqref{eq:sigmaoptcost}, we get
\begin{equation}\label{eq:sigmaoptcost2} w(\sigma') \ge (\delta'/2) \sum_{a \in A}\nd_ay(a)+ (\delta'/2)\sum_{b\in B}\ns_by(b)  - (\delta'\NU/2).\end{equation} Combining~\eqref{eq:sigmaoptcost2} with~\eqref{eq:sigmacostfinal}, we get $w(\sigma) \le w(\sigma') + \delta'\NU$.
 \end{proof}

\section{Proof of Lemma 2.4}
\begin{proof}
For any transportation plan $\sigma$, recollect that $\nw(\sigma) = \sum_{(a,b) \in A\times B} \sigma(a,b)\nc(a,b)$. For any augmenting path $P \in \mathbb{P}$, we let $\forward{P}$ (resp. $\backward{P}$) denote the set of forward (resp. backward) edges in $P$.
We define the net-cost for any augmenting path $P\in\mathbb{P}$ from a free supply node $b$ to a free demand node $a$ as
$ \Phi(P) = \sum_{(a',b') \in \forward{P}} (\nc(a',b') + 1) - \sum_{(a',b') \backward{P}} \nc(a',b')$.
 
To bound the length of the augmenting paths, we provide two different bounds for net-cost. First, we bound the net-cost of $P$ by $ \lfloor 2C/\delta'\rfloor + 1$.
\begin{eqnarray} \Phi(P)&=& \sum_{(a',b') \forward{P}} (y(a')+y(b')) - \sum_{(a',b') \in \backward{P}} (y(a')+ y(b'))\label{eq:feascond}\\
&=& y(b) \le \lfloor 2C/\delta'\rfloor + 1 \label{final}.
\end{eqnarray}
Equation \eqref{eq:feascond} follows from the fact that $P$ is an augmenting path in the admissible graph and the slack on every edge of $P$ is zero. Equation
\eqref{final} follows from Lemma~2.2 and the fact that every vertex except $a$ and $b$ has exactly one forward and one backward edge incident on it, and so their dual weights get canceled. Furthermore, when the augmenting path $P$ is found by the algorithm, $a$ is a free demand node, and, from (C), $y(a)=0$. 

Let $\sigma$ be the transport plan when $P$ was discovered by the algorithm. Let $\sigma'$ be the transport plan obtained after augmenting $\sigma$ along $P$ by $r_P$ units. Then,
\begin{eqnarray} r_P\Phi(P) 
&=&  (\sum_{(a',b') \in \forward{P}} r_P\cdot\nc(a',b'))  - (\sum_{(a',b') \in \backward{P}} r_P\cdot\nc(a',b')) + r_P\lceil |P|/2\rceil\label{eq:pathlength}\\
&=&  \nw(\sigma')-\nw(\sigma) + r_P\lceil|P|/2\rceil\label{eq:plandiff}\end{eqnarray}
\eqref{eq:pathlength} follows from the fact that there are exactly $\lceil |P|/2\rceil$ forward edges in any augmenting path. \eqref{eq:plandiff} follows from the fact that $\sigma$ and $\sigma'$ differ in the flow assignment $\sigma(a',b')$ for every edge $(a',b') \in P$. In particular, for a forward edge $(a',b')$, $\sigma'(a',b') = \sigma(a',b')+r_P$ and for a backward edge $(a',b') \in P$, $\sigma'(a',b') = \sigma(a,b)-r_P$. Let $\overline{\sigma}$ be the $1$-optimal transport plan returned by the algorithm. When we sum~\eqref{eq:pathlength} across all augmenting paths computed by the algorithm, the cost of all intermediate transport plans computed by the algorithm cancel each other and we get $\sum_{P\in \mathbb{P}}(r_P \Phi(P)) = \nw(\overline{\sigma}) + \sum_{P\in\mathbb{P}}r_P\lceil |P|/2\rceil$,

\begin{equation}
\NU(\lfloor 2C/\delta'\rfloor + 1) \ge \sum_{P\in\mathbb{P}}\lceil |P|/2\rceil. \label{eq:final}
\end{equation}
\eqref{eq:final} follows from the facts that $\nw(\overline{\sigma}) \ge 0$, $r_P \ge 1$, and $\sum_{P \in \mathbb{P}}r_P \le \mathcal{U}$, i.e., the total flow pushed along all augmenting paths is at most $\mathcal{U}$. 
From the fact that $\NU\le \alpha U = \frac{2nC}{\eps\delta}$, we get
$\sum_{P \in \mathbb{P}}|P|= \BigO(\frac{nC}{\eps\delta}\frac{C}{(1-\eps)\delta})$.
\end{proof}

%
\subsection{Proof of Invariants}
The following two invariants are maintained by the algorithm. The proofs of both are classical. For the sake of completion, we present the proofs here.
{\bf (I1):} The algorithm maintains a $1$-feasible transport plan, and,
{\bf (I2)} In each phase, the partial DFS step computes at least one augmenting path. Furthermore, at the end of this step, there is no augmenting path in the admissible graph.

\paragraph{Proof of (I1):} We show that the dual updates conducted by Hungarian search do not violate (I1), i.e., every forward edges satisfies~(3) and every backward edge satisfies~(4). For any directed edge $(u,v)$ in the residual graph, and from the shortest path property, 
\begin{equation}\label{eq:sp}\ell_u+ s(u,v) \ge \ell_v.\end{equation} 
There are four possibilities: (i) $\ell_u < \ell_t$ and $\ell_v < \ell_t$, (ii) $\ell_u \ge \ell_t$ and $\ell_v < \ell_t$, (iii) $\ell_u <\ell_t$ and $\ell_v \ge \ell_t$ or (iv) $\ell_u \ge \ell_t$ and $\ell_v \ge \ell_t$. Hungarian search does not update the dual weights for $u$ and $v$ in case (iv) and so any such forward (resp. backward) edge continues to satisfy~(3) (resp.~(4)). We present the proof for the other three cases. 

For case (i), if $(u,v)$ is a forward (resp. backward) edge then $u \in B$, $v\in A$ (resp. $u \in A$, $v \in B$). The updated dual weights $\tilde{y}(u) = y(u) + \ell_t - \ell_u$ (resp. $\tilde{y}(u) = y(u) - \ell_t+ \ell_u$) and $\tilde{y}(v) = y(v) - \ell_t+ \ell_v$ (resp. $\tilde{y}(v) = y(v) + \ell_t - \ell_v$) and the updated feasibility condition is $\tilde{y}(u)+\tilde{y}(v) = y(u)+y(v) + \ell_v - \ell_u$ (resp. $\tilde{y}(u)+\tilde{y}(v) = y(u)+y(v) - \ell_v + \ell_u$). From~\eqref{eq:sp}, $\tilde{y}(u)+\tilde{y}(v) \le y(v) + y(v) +s(u,v) = \nc(u,v) +1$ (resp. $\tilde{y}(u)+\tilde{y}(v) \ge y(v) + y(v) -s(u,v) = \nc(u,v)$) . The last equality follows from the definition of slack for a forward (resp. backward) edge. 

For case (ii), if $(u,v)$ is a forward (resp. backward) edge then $u \in B$, $v\in A$ (resp. $u \in A$, $v \in B$). The updated dual weights are $\tilde{y}(u) = y(u)$ and $\tilde{y}(v) = y(v) - \ell_t+ \ell_v$ (resp. $\tilde{y}(v)=y(v)+\ell_t - \ell_v$). Since the dual weight of $v$ reduces (resp. increases) and that of $u$ remains unchanged, their sum only reduces (resp. increases) and $\tilde{y}(u)+\tilde{y}(v) = y(u)+y(v) + \ell_v - \ell_t \le \nc(u,v) +1$ (resp. $\tilde{y}(u)+\tilde{y}(v) = y(u)+y(v) - \ell_v + \ell_t \ge \nc(u,v)$). 

For case (iii), if $(u,v)$ is a forward (resp. backward) edge then $u \in B$, $v\in A$ (resp $u \in A$, $v \in B$). The updated dual weights are $\tilde{y}(u) = y(u) + \ell_t-\ell_u$ (resp. $\tilde{y}(u) = y(u) - \ell_t+\ell_u$) and $\tilde{y}(v) = y(v)$. Note that from~\eqref{eq:sp} and (iii), $\ell_t - \ell_u \le \ell_v-\ell_u \le s(u,v)$ (resp. $\ell_u- \ell_t \ge \ell_u - \ell_v \ge -s(u,v)$), we have  $\tilde{y}(u)+ \tilde{y}(v) = y(u) + \ell_t-\ell_u + y(v) \le y(u)+ y(v) + s(u,v) = \nc(u,v) +1$ (resp. $\tilde{y}(u)+ \tilde{y}(v) = y(u) + \ell_u-\ell_t + y(v) \ge y(u)+ y(v) - s(u,v) = \nc(u,v)$). The last equality follows from the definition of slack for forward (resp. backward) edges.

Finally, the augment procedure may introduce new edges into the residual network. Any such forward (resp. backward) edge will not violate the $1$-feasibility conditions~(3) (resp.~(4)). An augmentation will create a new forward edge (resp. backward edge) $(u,v)$ only if flow was pushed along the backward edge (resp. forward edge) $(v,u)$. Since $(v,u)$ is on the augmenting path, $(v,u)$ is an admissible backward (resp. forward) edge, we have $y(a) + y(b) - \nc(u,v) =0$ (resp. $\nc(u,v) + 1 - y(u) - y(v)=0$). Therefore, the slack on the forward (resp. backward) edge $(u,v)$ is $s(u,v)=\nc(u,v) + 1 - y(u) - y(v) = 1$ (resp. $s(u,v)= y(u) + y(v) - \nc(u,v) = 1$) implying that the newly added forward (resp. backward) edge satisfies~(3) (resp.~(4)).

\paragraph{Proof of (I2):}
Consider the shortest path $P'$ from $s$ to $t$ in the augmented residual network. Let $b$ be the free supply node after $s$ and $a$ be the free demand node before $t$ along $P'$. we show that, after the dual updates conducted by Hungarian search, the path $P$ from $b$ to $a$ is an admissible augmenting path.  For any edge $(u,v)$ on $P$, by construction $\ell_u \le \ell_t$ and $\ell_v \le \ell_t$. Repeating the calculations of case (i) of the proof of (I1),  the updated feasibility condition is $\tilde{y}(u)+\tilde{y}(v) = y(u)+y(v) + \ell_v - \ell_u$ (resp. $\tilde{y}(u)+\tilde{y}(v) = y(u)+y(v) - \ell_v + \ell_u$). Note that for edges on the shortest path $P$, \eqref{eq:sp} holds with equality and so, $\tilde{y}(u)+\tilde{y}(v) = y(v) + y(v) +s(u,v) = \nc(u,v) +1$ (resp. $\tilde{y}(u)+\tilde{y}(v) = y(v) + y(v) -s(u,v) = \nc(u,v)$), i.e., the path $P$ is an admissible augmenting path.
The partial DFS step conducts a search from every free supply vertex including $b$. This will lead to the discovery of at least one augmenting path in the admissible graph.

Next, we show that at the end of partial DFS step, there is no augmenting path in the admissible graph. The partial DFS step maintains a graph $\mathcal{A}$ that is initialized to the admissible graph. After each DFS execution, every edge visited by the DFS except those on the augmenting path $P$ are deleted from $\mathcal{A}$. If no augmenting path is found, every edge and vertex visited by the DFS is deleted from $\mathcal{A}$. The partial DFS step ends when $\mathcal{A}$ does not have any free supply vertices remaining. 
Note that every vertex removed from $\mathcal{A}$ is a vertex from which the DFS search backtracked. Similar to Lemma~2.3 (in Gabow and Tarjan SIAM J. Comput. '89), it can be shown that there is no directed cycle consisting of admissible cycles and so, there is no path of admissible edges from any such vertex removed from $\mathcal{A}$ to a free demand node. Since, at the end of the partial DFS step, every free supply vertex is deleted from $\mathcal{A}$, there are no admissible paths from any free supply vertex to a free demand vertex in the admissible graph.
\end{document}